\documentclass{article}
\PassOptionsToPackage{table}{xcolor}
\usepackage{iclr2027_conference,times}
\usepackage[OT1]{fontenc}
\usepackage{iftex}
\ifXeTeX
  \usepackage{newunicodechar}
  \newunicodechar{’}{'}
  \newunicodechar{‘}{`}
  \newunicodechar{“}{``}
  \newunicodechar{”}{''}
  \newunicodechar{–}{--}
  \newunicodechar{—}{---}
\fi

\usepackage{amsmath,amsfonts,bm}

\def\eqref#1{equation~\ref{#1}}

\def\1{\bm{1}}

\DeclareMathAlphabet{\mathsfit}{\encodingdefault}{\sfdefault}{m}{sl}
\SetMathAlphabet{\mathsfit}{bold}{\encodingdefault}{\sfdefault}{bx}{n}

\newcommand{\E}{\mathbb{E}}

\newcommand{\softmax}{\mathrm{softmax}}

\usepackage{amssymb}
\usepackage[xcharter,bigdelims,vvarbb]{newtxmath}
\usepackage[scaled=1.1]{zlmtt}
\usepackage{hyperref}
\usepackage{url}
\usepackage{booktabs,tabularx,graphicx,wrapfig,capt-of,subcaption,pifont,placeins}
\usepackage{algorithm}
\usepackage{algpseudocode}
\algrenewcommand{\algorithmicrequire}{\textbf{Input:}}
\algrenewcommand{\algorithmicensure}{\textbf{Output:}}
\definecolor{tableBlue}{HTML}{6089B1}
\definecolor{tableBestPale}{HTML}{DFEDF7}
\definecolor{tableLowPale}{HTML}{EAF2F9}
\definecolor{tableGroup}{HTML}{ECEFF2}
\definecolor{tableFocus}{HTML}{F1F6FA}
\newcommand{\tablebest}[1]{\cellcolor{tableBestPale}\textbf{#1}}
\newcommand{\tablekey}[2]{\begingroup\setlength{\fboxsep}{1.2pt}%
  \colorbox{#1}{\bfseries #2}\endgroup}

\newtheorem{proposition}{Proposition}
\newcommand{\method}{\textnormal{HiLo-RD}}
\newcommand{\mask}{\texttt{[MASK]}}

\title{HiLo-RD: Cross-Budget Self-Distillation for\\ Masked Diffusion Language Models}
\author{\begin{minipage}{\dimexpr\textwidth-2\tabcolsep\relax}\centering
Xiang Li\textsuperscript{1}\quad
Chengting Yu\textsuperscript{2}\quad
Jungang Li\textsuperscript{1}\quad
Yilun Zheng\textsuperscript{3}\\[3pt]
Junzhuo Li\textsuperscript{1}\quad
Xingyao Zhang\textsuperscript{4}\quad
Wenbo Su\textsuperscript{4}\quad
Xuming Hu\textsuperscript{1,5}\\[6pt]
\normalfont\small
\textsuperscript{1}The Hong Kong University of Science and Technology (Guangzhou)\\
\textsuperscript{2}Zhejiang University\\
\textsuperscript{3}Nanyang Technological University\\
\textsuperscript{4}Alibaba Group\\
\textsuperscript{5}The Hong Kong University of Science and Technology
\end{minipage}}
\hypersetup{
  pdftitle={HiLo-RD: Cross-Budget Self-Distillation for Masked Diffusion Language Models},
  pdfauthor={Xiang Li, Chengting Yu, Jungang Li, Yilun Zheng, Junzhuo Li, Xingyao Zhang, Wenbo Su, Xuming Hu},
  pdfsubject={Preprint},
  unicode=true
}
\iclrfinalcopy
\begin{document}
\maketitle
\fancyhead{}
\begin{abstract}

We propose \textbf{HiLo-RD} (\textbf{Hi}gh-to-\textbf{Lo}w Budget \textbf{R}epresentation \textbf{D}istillation) to improve masked diffusion language models (MDLMs) under a limited generation budget. Unlike autoregressive models, MDLMs choose both what to generate and where to commit it. This creates two coupled challenges: predicting useful content before reasoning is complete, and deciding when to fix answer tokens that shape later predictions. We identify \textit{\textbf{budget-induced premature commitment}}: even when a correct solution fits, the model can fix answer fragments near the window boundary before resolving intermediate steps. These fragments become context for later predictions and redirect subsequent reasoning. Existing post-training methods do not directly transfer the stronger trajectories that the same model follows with a larger window.
HiLo-RD addresses this gap by (1) pairing large-window teacher states with cropped small-window views and (2) matching their hidden representations at committed positions. This supervision guides both token prediction and commitment order without an external teacher. Across mathematics benchmarks, HiLo-RD improves low-budget accuracy and shortens responses; these gains extend to code generation without code-domain training. On GSM8K, accuracy rises by 9.4 percentage points as mean readable length falls by about 25\%. Attention diagnostics and fixed-budget order interventions support the proposed mechanism. Together, these results show that large-window reasoning can improve small-window decoding.

\end{abstract}
\section{Introduction}

\begin{wrapfigure}[19]{r}{0.46\textwidth}
\vspace{-6pt}
\centering
\captionsetup{font=small}
\begin{subfigure}{\linewidth}
\centering
\includegraphics[width=0.93\linewidth]{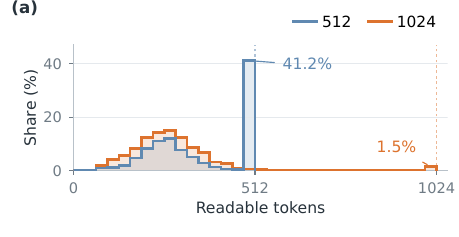}
\end{subfigure}\par\vspace{2pt}
\begin{subfigure}{\linewidth}
\centering
\includegraphics[width=0.93\linewidth]{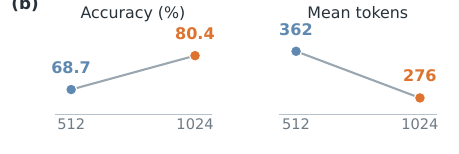}
\end{subfigure}
\caption{\textbf{More room, shorter and more accurate answers.}
At $L=T=512$ (blue) and $L=T=1024$ (orange), (a) shows response-length shares
in 32-token bins on common axes and (b) compares accuracy and mean readable
length. Values are approximate readings from the archived aggregate plot.}
\label{fig:panic-distribution}
\end{wrapfigure}

Masked diffusion language models (MDLMs) offer a route beyond left-to-right generation by reconstructing masked text under bidirectional attention \citep{sahoo2024simple,nie2025large,nie2026improvedlargelanguagediffusion}. In pure diffusion decoding, the model predicts candidates throughout a response window and chooses which positions to commit at each update. This allows reasoning to unfold across denoising steps and potentially across independent subproblems in parallel \citep{svete2026on}. For practical decoding, however, the response window and the number of updates are finite. The model must therefore decide both \emph{what} to write and \emph{when} to fix each part of the answer within a limited budget.

\label{sec:panic-observations}
Consider a mathematical question for which the same model gives different answers under two response-window sizes. With the smaller window, it can fix answer fragments near the end before filling the reasoning in between, even though a correct solution would fit. With a larger window, it may defer the answer and reach a shorter, correct solution. We call this failure \emph{budget-induced premature commitment}. Figure~\ref{fig:panic-distribution} shows the broader aggregate pattern: fewer responses end near the larger window's limit, while accuracy rises and mean readable length falls. This raises our central question: \emph{If a larger window yields shorter, more accurate answers, can we distill that advantage into a smaller one?}

This example exposes two coupled challenges in low-budget diffusion reasoning. First, the model must choose \emph{what} to predict while intermediate steps are still unresolved. Second, it must decide \emph{when} to commit an answer fragment, because committed tokens cannot be revised and become context for every later prediction. A premature answer can thus shape the reasoning subsequently filled around it. Existing inference-time methods adjust the available space or restrict the order of generation \citep{li2025beyondfixed,arriola2025block}; post-training methods learn from demonstrations, outcome rewards, or self-distillation targets \citep{zhao2025d1,zhao2026selfdistilled,dat2026dopsd}. These approaches improve decoding in different ways, but do not directly transfer the same model's more effective large-window computation into its small-window decisions.

We investigate how an early commitment changes the subsequent trajectory. Before the model fixes the response tail, attention concentrates near the small-window boundary. Once tail tokens are committed, unresolved positions attend more to them and less to the established reasoning prefix. This feedback makes later predictions depend on an answer selected before its supporting steps are complete. In a set of 155 selected cases where native low-budget decoding is wrong and native high-budget decoding is correct, temporarily delaying tail commitments at the original low budget recovers correct answers in 43.2\% of cases, compared with 30.3\% for a matched order control. These diagnostics connect the budget, commitment order, and downstream reasoning, and show that a better path can remain available without increasing the deployment budget.

We summarize our main contributions as follows.
\par\noindent\ding{182} We identify \emph{budget-induced premature commitment}: a small window can induce an early answer even when a correct solution fits. Attention diagnostics and fixed-budget interventions trace how such commitments influence subsequent reasoning.
\par\noindent\ding{183} We introduce \textbf{HiLo-RD} (High-to-Low Budget Representation Distillation). It aligns large-window teacher states with their cropped small-window views and matches hidden representations at committed positions, using the model's own trajectories to guide low-budget decoding.
\par\noindent\ding{184} At the same small inference budget, HiLo-RD raises GSM8K accuracy from 68.84\% to 78.24\% while reducing mean readable length from 362.8 to 271.3 tokens. The gains extend to code generation without code-domain training, and successive distillation improves the low-budget quality--length trade-off on a fixed training subset.

Overall, HiLo-RD transfers the same model's larger-window computation into more accurate, more concise small-window responses. The results show that the way a diffusion model commits its answer is a useful target for improving reasoning under a limited generation budget.

\section{Preliminaries and Related Work}
\label{sec:preliminary}

\subsection{Masked Diffusion Decoding}
\label{sec:pure-decoding}

We consider pure diffusion decoding with LLaDA-8B-Instruct \citep{nie2025large}, a masked diffusion language model (MDLM). The full response is decoded as one bidirectional block. Once a token is committed, it cannot be revised.

Write the prompt and response state before each update as
\[
 \mathbf{q}=[q_1,\ldots,q_P],\qquad
 \mathbf{y}_t=[y_{t,1},\ldots,y_{t,L}],\qquad \mathbf{y}_0=\mask^L.
\]
A response position $j$ has input index $P+j$. The masked and observed (committed) positions are
\begin{equation}
 \mathcal M_t=\{j\in[L]:y_{t,j}=\mask\},\qquad
 C_t=[L]\setminus\mathcal M_t,
 \label{eq:decoding-positions}
\end{equation}
where $[L]=\{1,\ldots,L\}$. For vocabulary $\mathcal V$, the model with parameters $\theta$ predicts at each masked position
\[
 \hat y_{t,j}=\arg\max_{v\in\mathcal V}
 p_\theta(v\mid\mathbf{q},\mathbf{y}_t,j).
\]
A confidence rule ranks masked positions, suppressing EOS/EOT candidates in the reference decoder. Selected positions receive their predicted tokens; all other positions remain unchanged. Selection need not follow response order, and the ranking score need not equal token probability.

A budget specifies the response-window length and planned number of updates. A run can terminate after $R$ updates:
\[
 B\triangleq(L,T),\qquad 1\leq R\leq T\leq L.
\]
The full decoder fills the window in $T$ updates. When $L=T$, it commits one position per update; when $T<L$, at least one update commits multiple positions. Enlarging the window changes both the visible context and the positions competing for commitment.

\subsection{Decoding Budgets and Realized Computation}
\label{sec:budget-cost}

At each update, the decoder processes the prompt and full response window afresh. For $N=P+L$ input positions, dense forward cost (including the vocabulary projection) scales as
\begin{equation}
 F(N)=\Theta\!\left(
 n_\ell\left[Nd^2+N d d_{\mathrm{ff}}+N^2d\right]
 +Nd|\mathcal V|
 \right).
 \label{eq:forward-cost}
\end{equation}
Here $n_\ell$ is the number of layers, $d$ the hidden width, and $d_{\mathrm{ff}}$ the feed-forward width. Bidirectional states can change when new response tokens are committed, so this decoder does not reuse a KV cache across updates \citep{vaswani2017attention,nie2025large}. Using the forward-pass cost as a proxy gives
\begin{equation}
 \mathcal C(\mathbf{q},B;R)\triangleq R F(P+L),\qquad
 \mathcal C_{\mathrm{full}}(\mathbf{q},B)\triangleq T F(P+L).
 \label{eq:decoding-cost}
\end{equation}
Each avoided update saves one full-window forward. Section~\ref{sec:fast-rollout} specifies the stopping rules.

\subsection{Related Work}
\label{sec:related-work}

\paragraph{Diffusion language models.}
Masked diffusion models predict missing tokens from visible context, allowing positions to be committed out of order \citep{sahoo2024simple,nie2025large}. LLaDA and iLLaDA train from scratch, while DiffuLLaMA and Dream adapt autoregressive checkpoints \citep{nie2026improvedlargelanguagediffusion,gong2025scaling,ye2025dream}. Block diffusion orders generation between blocks \citep{arriola2025block}; WeDLM uses causal attention for cache-friendly recovery \citep{liu2025wedlm}. Our single-block setting keeps all masked response positions eligible at each update.

\paragraph{Post-training methods.}
Masked SFT learns from partially masked references; Diffu-GRPO uses outcome rewards \citep{nie2025large,zhao2025d1}. OPSD and dOPSD self-distill sampled trajectories \citep{zhao2026selfdistilled,dat2026dopsd}, while T3D distills trajectories to reduce denoising steps \citep{zhang2026fewstepdiffusionlanguagemodels}. \method\ pairs a teacher state with its cropped small-window view and matches their hidden states to improve low-budget reasoning.

\section{Budget-Induced Premature Commitment and Its Consequences}
\label{sec:panic-setup}
\label{sec:panic-analysis}

A response window can fit a correct solution yet still change when a masked diffusion language model fixes its answer. We study 155 paired cases selected for an incorrect answer at $L=T=512$ and a correct answer from the same model at $L=T=1024$. In the smaller window, the model commits an answer fragment near the response tail before the intervening reasoning is complete. We connect this failure to boundary-focused attention, subsequent attention to the committed tail, and recovery when tail commitment is delayed within the original budget.

\subsection{Boundary Attention Precedes Premature Commitment}
\label{sec:panic-budget-attention}

Figure~\ref{fig:panic-budget-sensitivity} examines attention immediately before the first ordinary tail commitment. Under low-budget decoding, a band forms near the response-window boundary across the middle and later layers while intermediate positions remain unresolved. Appending 512 visible \mask{} positions to the same partial state and query weakens this band.

\begin{figure}[!ht]
\centering
\includegraphics[width=\linewidth]{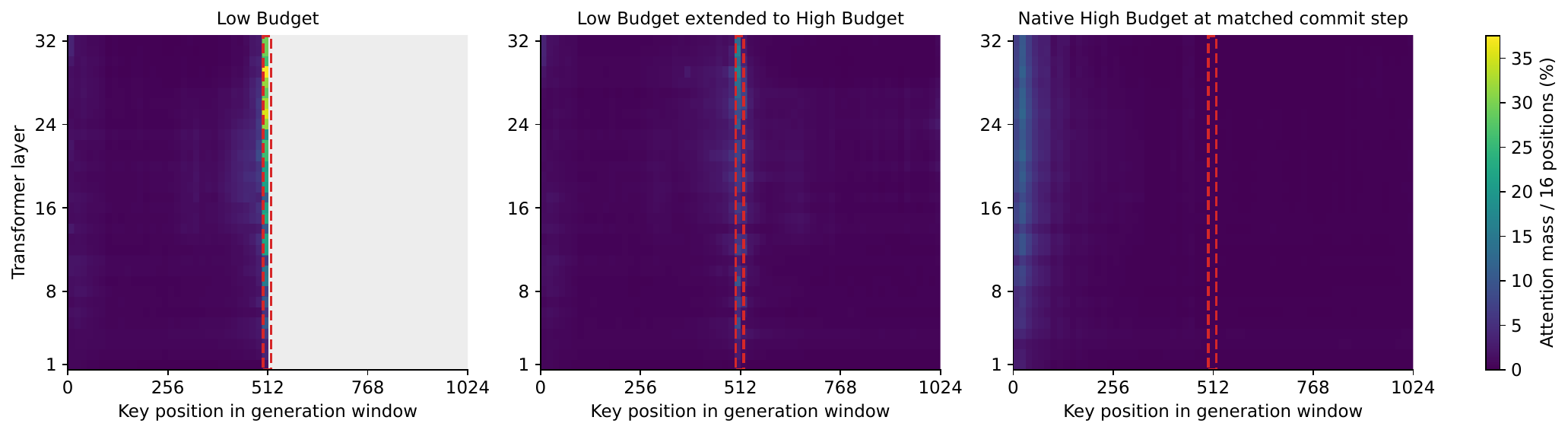}
\caption{\textbf{Boundary-focused attention precedes tail commitment.}
Visible-window expansion relaxes this focus at the same state and query (left vs. middle), while native high-budget decoding attends more to earlier response positions at the matched update (right). Panels share an attention scale; red dashed lines mark the original low-budget boundary.}
\label{fig:panic-budget-sensitivity}
\end{figure}

\FloatBarrier

Native high-budget decoding, followed from initialization to the matched update, instead attends more to earlier decoded reasoning. The within-state comparison isolates the effect of visible response space; the native trajectory shows the computation reached when that space is available throughout decoding. The smaller window changes attention before the first tail token is fixed.

\subsection{Committed Tail Content Attracts Subsequent Attention}
\label{sec:panic-tail-reading}

\begin{wrapfigure}[10]{r}{0.44\textwidth}
\centering
\includegraphics[width=\linewidth]{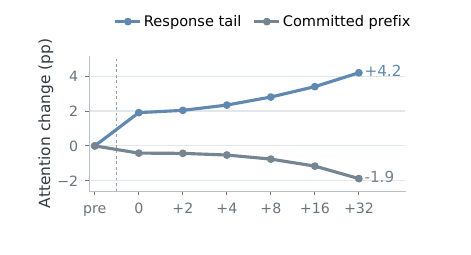}
\caption{\textbf{Attention shifts toward the committed tail.} Changes from the pre-commitment update, averaged across layers and 145 valid traces. The dashed line marks commitment; $0$ is immediately after it.}
\label{fig:panic-tail-commitment}
\vspace{-3pt}
\end{wrapfigure}

Once the tail is committed, it becomes part of the bidirectional context for unresolved positions.

Figure~\ref{fig:panic-tail-commitment} follows the same unfinished front positions across 145 valid traces. Immediately after the first tail commitment, their mean attention to the tail rises by 1.91 percentage points, while attention to the established prefix falls by 0.41 points. Along the subsequent decoding trajectory, these changes reach $+4.19$ and $-1.87$ points at $+32$ updates. The early answer fragment thus enters the context used to complete the still-masked reasoning steps.

\subsection{Tail Delay Restores Correct Answers}
\label{sec:panic-trajectory-interventions}

\begin{wrapfigure}{r}{0.50\textwidth}
\vspace{-6pt}
\centering
\includegraphics[width=\linewidth]{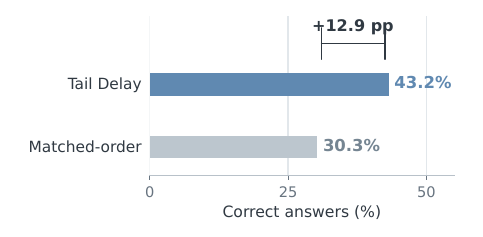}
\caption{\textbf{Tail delay improves low-budget recovery.} Correct continuations in 155 cases selected for an incorrect native 512-token answer and a correct native 1024-token answer. Both interventions use $L=T=512$.}
\label{fig:panic-decoding-intervention}
\end{wrapfigure}

A fixed-budget intervention tests whether postponing that decision opens a better continuation. \emph{Tail Delay} excludes the original response-tail positions during the first 10\% of remaining updates. A \emph{Matched-order Control} perturbs selection at the same update indices but excludes its own current highest-scoring position, without a tail-specific rule. Both branches start from the same pre-event state and retain the $L=T=512$ window, predictor, and decoding schedule.

Tail Delay reaches the correct answer in 67 of the 155 selected cases (43.2\%), compared with 47 (30.3\%) for the order control, a paired gain of 12.9 percentage points (95\% bootstrap interval: 7.7--18.7). The advantage over a matched order perturbation supports postponing the tail as the useful change. Successful paths are accessible within the smaller window. This motivates HiLo-RD to transfer the high-budget model's commitment-aware representations into low-budget decoding.

\section{\method: Cross-Budget Representation Self-Distillation}
\label{sec:method}

\subsection{Cross-Budget Representation Distillation}
\label{sec:views}
\label{sec:loss}

\method\ transfers the commitment-aware representations of the model's own high-budget trajectories into low-budget decoding. A frozen teacher and a trainable student start from the same checkpoint and receive the same prompt $q$. Their budgets are $B_T=(L_T,T_T)$ and $B_S=(L_S,T_S)$, where $L_S<L_T$. The high-budget teacher follows a trajectory that is less affected by Budget-Induced Premature Commitment; its states therefore encode more reliable decisions about both \emph{what} to predict and \emph{where} to commit.

\begin{figure}[!ht]
\centering
\includegraphics[width=\linewidth]{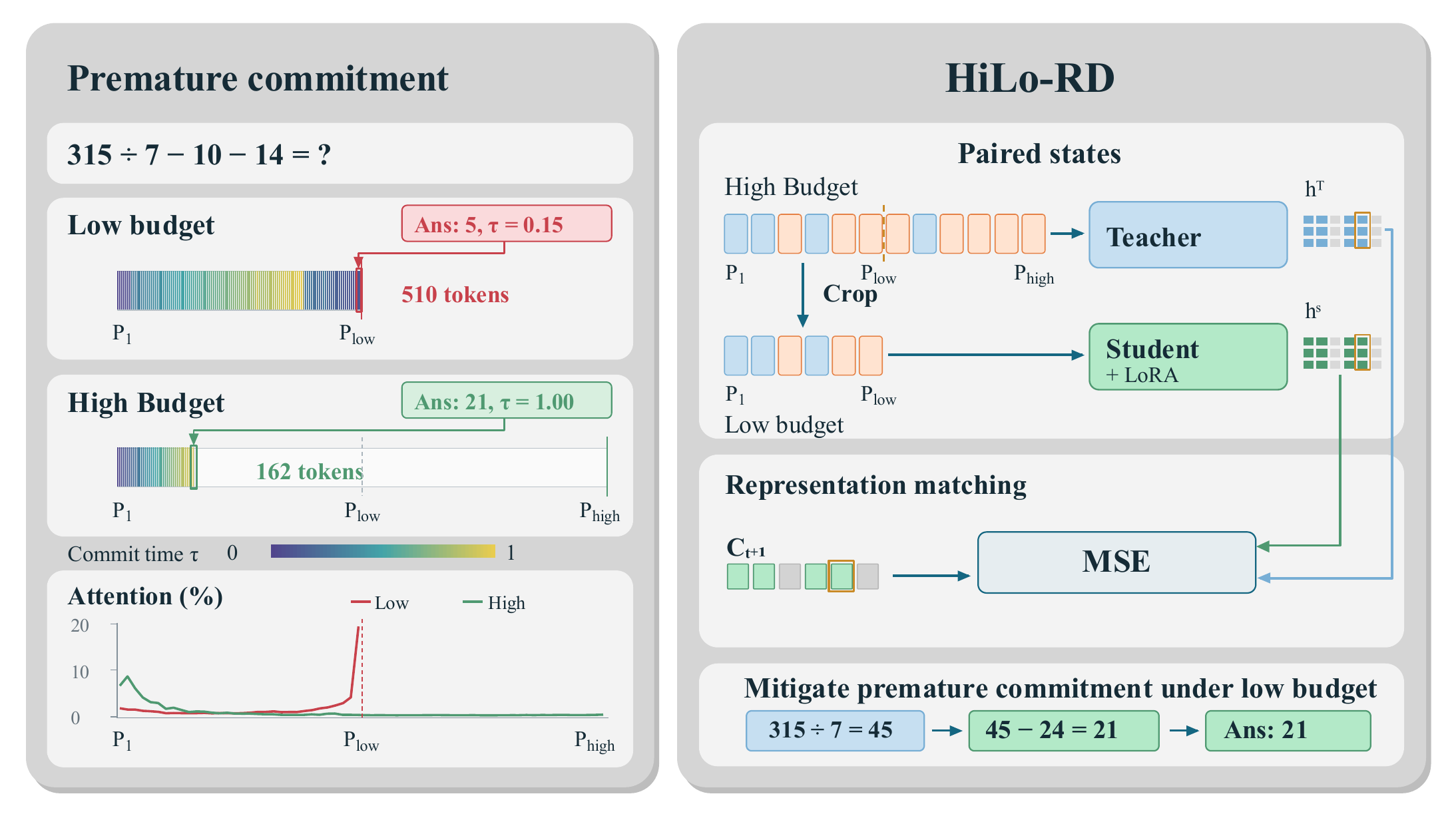}
\caption{\textbf{\method\ overview.} \textbf{Left: Budget-Induced Premature Commitment.} For the same prompt, low-budget decoding concentrates attention near its response boundary $P_{\mathrm{low}}$, fixes an incorrect answer early, and produces a response that fills its allocated window. With a larger budget, attention remains on earlier reasoning positions; the same model commits the correct answer at the final decoding stage and produces a much shorter response. Colors indicate normalized commitment time. \textbf{Right: Cross-budget representation self-distillation.} A frozen teacher processes a high-budget state, while the trainable student receives the crop ending at $P_{\mathrm{low}}$. Hidden-state MSE over the cumulative teacher-committed set $C_{t+1}$, including newly and previously committed positions, distills the reliable high-budget trajectory encoded in the teacher representations into the low-budget student. The trained student follows a coherent low-budget path and reaches the correct answer.}
\label{fig:framework}
\end{figure}

For each retained teacher state $y_t$, the student receives its cropped response state $y_{t,1:L_S}$. The overlapping positions preserve their token values and position indices, yielding an exact cross-budget correspondence. Let $H_\theta$ denote the final normalized hidden states before the shared vocabulary head. At each response position $j\leq L_S$, the paired representations are
\begin{align}
 h^T_{t,j}&=H_{\bar\theta}(q,y_t)_{P+j},
 \label{eq:teacher-hidden}\\[-2pt]
 h^S_{t,j}&=H_\theta(q,y_{t,1:L_S})_{P+j},
 \label{eq:student-hidden}
\end{align}
where $\bar\theta$ is the frozen teacher. The teacher representation is computed over its high-budget trajectory before cropping, whereas the student must produce its representation from the low-budget view.

We supervise the cumulative positions committed by the teacher after update $t$: $m_{t,j}=\mathbf{1}\{j\in C_{t+1}\}$ and $n_t=\sum_{j=1}^{L_S}m_{t,j}$. This includes both previously committed positions and newly selected positions whose inputs in $y_t$ remain masked. For $n_t>0$, HiLo-RD minimizes
\begin{equation}
 \ell_t=\frac{1}{d\,n_t}\sum_{j=1}^{L_S}m_{t,j}
 \left\|h^S_{t,j}-\operatorname{sg}[h^T_{t,j}]\right\|_2^2,
 \label{eq:state-loss}
\end{equation}
where $d$ is the hidden dimension and $\operatorname{sg}$ stops the teacher gradient. We average this loss over valid retained states and contributing trajectories. The high-budget trajectory is encoded directly in each teacher representation $h^T_{t,j}$: it reflects the trajectory's bidirectional context, commitment history, and its reliable decisions about both what to predict and where to commit. Minimizing Eq.~(\ref{eq:state-loss}) forces the low-budget representation $h^S_{t,j}$ to internalize this high-budget trajectory information at the corresponding position. HiLo-RD is therefore a trajectory distillation method at the representation level, rather than a loss on isolated tokens or separately specified commitment labels. The shared vocabulary head maps the resulting representation alignment to locally aligned token distributions; Appendix~\ref{app:proofs} gives the corresponding bound. Since commitment confidence is also derived from these representations, the same objective improves both token prediction and commitment order.

\subsection{Efficient Trajectory Acquisition}
\label{sec:fast-rollout}
Full-completion decoding continues until every allocated position is filled, even when the readable response is already complete. An early EOT alone is insufficient to stop: an EOT token may be committed before earlier reasoning positions have been resolved. We instead require a \emph{clean EOT}: the leftmost committed EOT and every preceding response position must be committed. With unchanged updates and irreversible commitment, further decoding cannot alter the response extracted before that EOT (Proposition~\ref{prop:stop-prefix}).

Teacher acquisition can additionally stop when its first $L_{\mathrm S}$ response positions are filled, since these define the supervision window. This cap preserves the completed prefix but may truncate a longer solution; it is distinct from response-preserving clean-EOT stopping. Before termination, all positions in the teacher's full window remain eligible for commitment. Both rules are applied during rollout, so they avoid forwards rather than merely discarding stored states.

If stopping reduces the realized updates from $T$ to $R$, the cost becomes $R F(P+L)$ instead of $T F(P+L)$ under Section~\ref{sec:budget-cost}. Each forward still processes the full window. Efficient acquisition therefore reduces continuation cost, while distillation improves the quality attainable at a smaller budget. Exact preservation assumes matched decoding and first-EOT extraction; implementation-specific extraction details are given in Appendix~\ref{app:metrics}.

\section{Experiments}
We evaluate \method\ through four research questions covering low-budget reasoning quality, transfer, rollout efficiency, and iterative self-distillation.

\begingroup
\setlength{\parskip}{1pt}
\noindent\textbf{RQ1} (Performance): Can \method\ distill high-budget reasoning into accurate, compact low-budget decoding?\par
\noindent\textbf{RQ2} (Generalization): Do these decoding gains generalize to out-of-domain code generation?\par
\noindent\textbf{RQ3} (Efficiency): Can efficient trajectory acquisition cut pure-diffusion rollout cost without quality loss?\par
\noindent\textbf{RQ4} (Self-Evolving): Can successive self-distillation drive self-evolving quality--budget refinement?\par
\endgroup

\subsection{Experimental Setup}

\paragraph{Data Preparation.} We train separately on the full GSM8K training set \citep{cobbe2021gsm8k} and the original MATH training set \citep{hendrycks2021math}, and evaluate the corresponding checkpoints on the GSM8K test set and MATH500 \citep{lightman2024verify}, respectively. For out-of-domain evaluation, we directly test mathematics-trained checkpoints on HumanEval \citep{chen2021humaneval} and MBPP \citep{austin2021mbpp}, without additional code-domain training.

\paragraph{Baselines.} We compare \method\ with the original LLaDA-8B-Instruct (\emph{vanilla}) \citep{nie2025large} and four post-training methods: masked supervised fine-tuning (Masked SFT), Diffu-GRPO \citep{zhao2025d1}, on-policy self-distillation (OPSD) \citep{zhao2026selfdistilled}, and dOPSD \citep{dat2026dopsd}. All comparisons use pure diffusion decoding. For vanilla and \method, we additionally compare full decoding with the fast-rollout stopping rule in Section~\ref{sec:fast-rollout} to assess its effect on quality and efficiency. Section~\ref{sec:related-work} situates these post-training objectives.

\paragraph{Training and Evaluation.} We perform parameter-efficient adaptation with LoRA \citep{hu2022lora}, optimizing the adapter parameters with AdamW \citep{loshchilov2019adamw} while keeping the backbone weights frozen. The \method\ training runs use eight NVIDIA A100 80GB GPUs. We measure mathematical reasoning performance using accuracy and code-generation performance using pass@$k$ in the main comparison. We also report mean readable token length (RTL) and mean wall-clock time per request to characterize response length and inference cost. Additional training and evaluation details are provided in Appendix~\ref{app:implementation}.

\subsection{Low-Budget Performance and Cross-Domain Transfer (RQ1 \& RQ2)}
\label{sec:performance-transfer}
\begin{table}[t]
\fontencoding{OT1}\selectfont
\caption{\textbf{Low-budget reasoning and transfer.} All methods use the same low-budget pure diffusion setting and fast rollout. Acc. and Pass@1 are percentages; RTL is mean readable tokens, and Time is seconds per request. \tablekey{tableGroup}{Domain} separates mathematics and code; \tablekey{tableFocus}{HiLo-RD} identifies our method; \tablekey{tableBestPale}{Best} marks the highest task score or shortest response per benchmark. Bold and underlined values mark the best and second-best distinct values in each column, including ties.}
\label{tab:main_results_no_nfe}
\centering
\small
\setlength{\tabcolsep}{5pt}
\renewcommand{\arraystretch}{1.14}
\begin{tabularx}{\linewidth}{@{}l*{6}{>{\raggedleft\arraybackslash}X}@{}}
\toprule
& \multicolumn{3}{c}{GSM8K} & \multicolumn{3}{c}{MATH500}\\
\cmidrule(lr){2-4}\cmidrule(lr){5-7}
Method & Acc.$\uparrow$ & RTL$\downarrow$ & Time$\downarrow$ & Acc.$\uparrow$ & RTL$\downarrow$ & Time$\downarrow$\\
\midrule
\rowcolor{tableGroup}
\multicolumn{7}{@{}c@{}}{\bfseries In-domain mathematics reasoning}\\
vanilla & 68.84 & 362.8 & 73.35 & 28.40 & 479.6 & 31.09\\
Masked SFT & 70.13 & 366.8 & 72.10 & \underline{30.60} & \underline{478.3} & 31.03\\
Diffu-GRPO & \underline{72.93} & \underline{325.4} & \textbf{63.53} & 29.20 & 482.1 & 31.19\\
OPSD & 69.90 & 360.7 & 73.25 & 30.20 & 478.5 & \underline{30.96}\\
dOPSD & 70.05 & 363.9 & 73.47 & 29.40 & 478.4 & \underline{30.96}\\
\rowcolor{tableFocus}
\textbf{HiLo-RD} & \tablebest{78.24} & \tablebest{271.3} & \underline{71.71} & \tablebest{32.40} & \tablebest{458.2} & \textbf{29.90}\\
\bottomrule
\end{tabularx}
\vspace{6pt}
\begin{tabularx}{\linewidth}{@{}l*{6}{>{\raggedleft\arraybackslash}X}@{}}
\toprule
& \multicolumn{3}{c}{HumanEval} & \multicolumn{3}{c}{MBPP}\\
\cmidrule(lr){2-4}\cmidrule(lr){5-7}
Method & Pass@1$\uparrow$ & RTL$\downarrow$ & Time$\downarrow$ & Pass@1$\uparrow$ & RTL$\downarrow$ & Time$\downarrow$\\
\midrule
\rowcolor{tableGroup}
\multicolumn{7}{@{}c@{}}{\bfseries Out-of-domain code generation}\\
vanilla & 45.73 & 510.0 & 35.59 & \underline{38.60} & 151.0 & 19.51\\
Masked SFT & 47.56 & 510.0 & 35.53 & 37.40 & 148.9 & 19.36\\
Diffu-GRPO & 48.78 & 510.0 & 35.54 & 38.40 & \underline{127.1} & \underline{16.61}\\
OPSD & 46.95 & \underline{508.4} & \underline{35.52} & 38.00 & 146.9 & 19.11\\
dOPSD & \tablebest{50.00} & 510.0 & 35.64 & 37.80 & 143.1 & 18.73\\
\rowcolor{tableFocus}
\textbf{HiLo-RD} & \underline{49.39} & \tablebest{508.2} & \textbf{35.45} & \tablebest{41.40} & \tablebest{84.7} & \textbf{11.20}\\
\bottomrule
\end{tabularx}
\end{table}

The results and comparisons of the model performance using different methods under the same low-budget pure diffusion setting with fast rollout are shown in Table~\ref{tab:main_results_no_nfe}. \method\ improves over vanilla on all four benchmarks while producing shorter readable responses.

\paragraph{In-domain mathematical reasoning.}
We observe that \method\ improves low-budget mathematical reasoning across datasets of different difficulty, achieving the highest accuracy and shortest readable responses among the compared methods on both GSM8K and MATH500. The joint improvement matters because it shows that better reasoning need not come from extending the readable solution: higher-budget representations can instead help the same model use its smaller window more effectively. Compared with the other post-training objectives, representation self-distillation yields a more favorable balance between correctness and response length. This supports the motivation developed in Section~\ref{sec:panic-setup}: useful computation exposed by a larger budget can guide low-budget token predictions and commitment decisions, rather than requiring the larger budget at deployment. These gains align with the mechanism of Budget-Induced Premature Commitment: stronger low-budget reasoning no longer requires longer readable responses, although the attainable degree of compression remains task-dependent.

\paragraph{Out-of-domain code generation.}
The benefits extend to code generation without further code-domain training, suggesting that the learned improvement is not confined to mathematics-specific answer patterns. On MBPP, \method\ combines the strongest pass@$k$ with the shortest responses. On HumanEval, it remains competitive with dOPSD, which has a slight advantage in pass@$k$, while the reduction in readable length is marginal. This contrast shows that quality transfer need not be accompanied by equally strong compression across domains. Nevertheless, improvement over vanilla on both benchmarks supports the transferability of the decoding behavior learned through mathematics-based representation distillation. The same model's higher-budget computation thus supplies useful supervision beyond the training domain, without teacher-only reference solutions.

\subsection{Efficient Trajectory Acquisition (RQ3)}
\label{sec:rollout-efficiency}

\begin{wrapfigure}[21]{r}{0.54\textwidth}
\centering
\includegraphics[width=\linewidth]{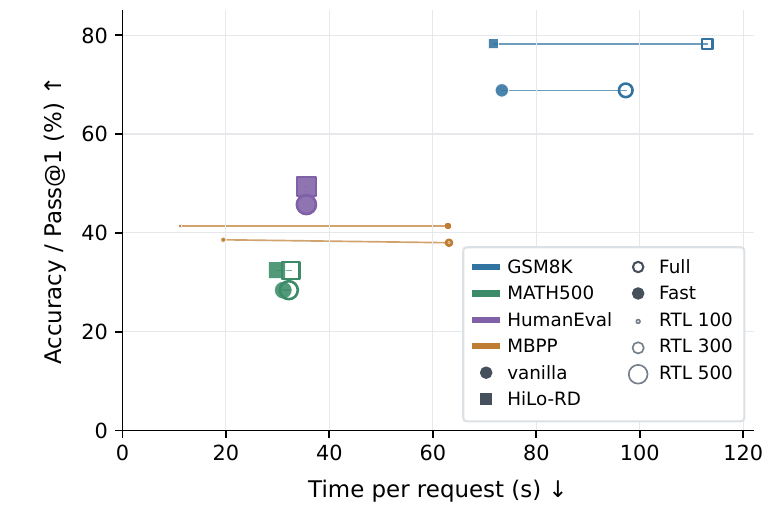}
\caption{\textbf{Quality--latency trade-off.} Color denotes dataset; shape and fill identify model and rollout. Linear marker size represents RTL. Lines pair Full/Fast results. Timings await normalization.}
\label{fig:rollout-tradeoff}
\end{wrapfigure}

Efficient trajectory acquisition reduces unnecessary decoding after response completion. Figure~\ref{fig:rollout-tradeoff} compares full and fast rollout for vanilla and \method; within each dataset, movement toward the upper left indicates higher quality and lower latency, while smaller linear marker size indicates shorter RTL. Fast rollout reduces the recorded latency most clearly on GSM8K and MBPP, with little change on HumanEval. In particular, the GSM8K \method\ pair has identical accuracy and RTL under full and fast rollout. This illustrates the distinction between the two efficiency gains: distillation improves what the model generates, whereas completion-aware stopping can save computation without shortening the completed response. Reported timing differences remain subject to cross-server normalization.

\paragraph{Why savings vary across tasks.}
The task-dependent pattern is consistent with the cost reduction from $T F(P+L)$ to $R F(P+L)$: early completion leaves more updates to skip, and each skipped update saves a forward pass over the full prompt and response window. Short MBPP responses leave substantial unnecessary continuation, whereas HumanEval responses nearly fill the window. Longer prompts, such as those used for GSM8K, also increase the saving per skipped update when other settings are fixed. RTL alone does not determine the stopping step, since commits outside the eventual readable prefix also consume updates. Thus, the observed efficiency depends jointly on response completion, commitment order, and input length. Under matched decoding and extraction, the stopping rule preserves the completed prefix (Proposition~\ref{prop:stop-prefix}); applied to teacher acquisition, it similarly avoids continuation after the student-sized supervision prefix is complete. This makes efficient acquisition complementary to representation distillation, rather than a substitute for learning better low-budget decoding.

\subsection{Toward Self-Evolving Quality--Budget Refinement (RQ4)}
\label{sec:iterative-refinement}

We investigate whether \method\ can bootstrap its own supervision across successive distillation cycles. Starting from the unadapted model C0, each cycle uses the preceding model to acquire new high-budget trajectories and distills them back into low-budget decoding. All cycles reuse the same fixed subset of 1,000 GSM8K training examples, so improvements arise from recursively refining the model's own cross-budget supervision rather than introducing additional data. Figure~\ref{fig:iterative-c0-c3} reports C0--C3, with exact measurements provided in Appendix~\ref{app:iterative-refinement}.

\begin{figure}[!ht]
\centering
\includegraphics[width=\linewidth]{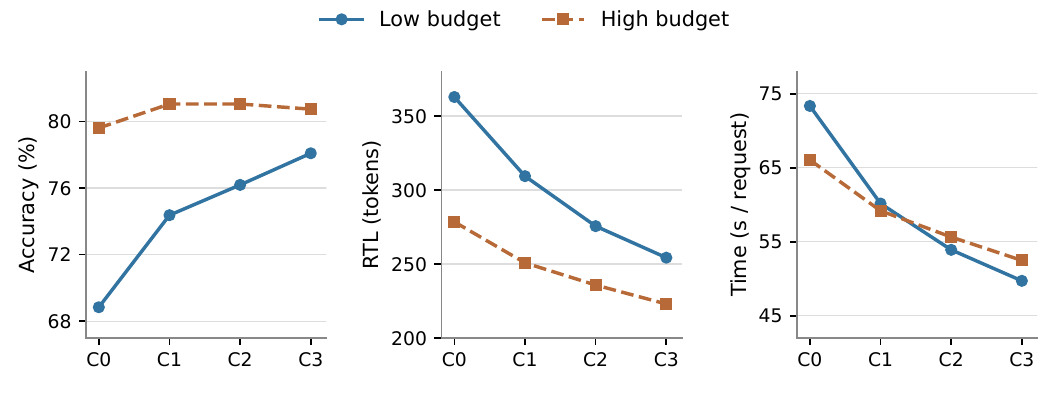}
\caption{\textbf{Successive cross-budget self-distillation on GSM8K.} Each cycle reuses the same 1,000 training examples and acquires new high-budget supervision from the preceding model. Across C0--C3, low-budget accuracy rises while readable length and latency fall; high-budget decoding remains stronger than C0 and becomes progressively more efficient.}
\label{fig:iterative-c0-c3}
\end{figure}

Successive distillation progressively improves the quality--budget frontier. From C0 to C3, low-budget accuracy rises while readable length and latency fall; high-budget decoding remains above C0 accuracy and becomes progressively shorter and faster. Because every cycle reuses the same data, these coupled gains arise from recursively improving the model's own supervision. This provides a concrete path toward self-evolving budget adaptation without an external teacher or additional training examples.

\section{Conclusion and Future Work}
We identify Budget-Induced Premature Commitment in masked diffusion language models: under a low yet sufficient response budget, the model can fix response-tail fragments before the reasoning that supports them is resolved. Decoding traces, attention diagnostics, and commitment-order interventions trace how this early decision redirects subsequent bidirectional reasoning and degrades the resulting trajectory. Guided by this mechanism, \method\ distills representations from the same model's reliable high-budget trajectories into low-budget decoding. The resulting model reasons more accurately with shorter responses across mathematical reasoning and code generation, realizing \emph{Think Wide, Reason Lean}; completed-prefix stopping further avoids unnecessary decoding work.

Successive distillation further improves the quality--budget frontier using the model's own trajectories, opening a path toward self-evolving diffusion reasoning. Incorporating richer intermediate supervision may help sustain this refinement across longer sequences of self-distillation cycles.

\newpage
\subsection*{AI use statement}
In this work, we used generative AI tools solely to polish the manuscript. We
 did not use generative AI tools to generate or analyze experimental results,
produce research artifacts, or make scientific claims. At least two authors
independently conducted a detailed review of every AI-assisted passage, checking
its accuracy, clarity, and consistency with the reported methods, experiments,
and claims. We take responsibility for the final content of this work, including
text, claims, or artifacts produced with the aid of generative AI.

\bibliography{iclr2027_conference}
\bibliographystyle{iclr2027_conference}
\clearpage
\appendix

\section{Budget-Induced Premature Commitment: Diagnostic Protocols and Additional Evidence}
\label{app:panic-diagnostics}

\subsection{Main diagnostic cohort and controlled comparisons}
The three diagnostics in Figures~\ref{fig:panic-budget-sensitivity}--\ref{fig:panic-decoding-intervention} use 155 randomly sampled cases exhibiting Budget-Induced Premature Commitment during low-budget decoding. These paired cases have an incorrect native $L=T=512$ response and a correct native $L=T=1024$ response. All three diagnostics use this same cohort; recovery rates in Figure~\ref{fig:panic-decoding-intervention} are measured within these failure/success pairs. Figure~\ref{fig:panic-distribution} reports the population-level length and accuracy patterns. The complementary pilot and illustrative traces below use separate cohorts.

\paragraph{Window expansion before commitment.}
\label{app:panic-window-construction}
At the first ordinary response-tail commitment in each low-budget trace, we evaluate the immediately preceding state using the query at the selected tail position. The extended view appends 512 visible MASK positions while retaining the prompt, original token identities, position indices, and query. Native high-budget decoding is replayed to the same per-case update index and uses its own next-selected position as the query. The extension therefore tests the same state's sensitivity to visible response space, while the native comparison characterizes the computation reached along a high-budget trajectory. Attention is averaged over heads within each case and then across cases, using 16-position generation-key bins, all 32 layers, and a shared linear scale. The highlighted interval is response positions 500--520. Values are raw attention percentages over each condition's full key set. Shared-key normalization and fixed-target confidence probes below separate changes in attention normalization from changes in commitment preference.

\paragraph{Response tail and replay state.}
Let $e$ be the realized response extent, ending at the first spatial EOT, otherwise the first EOS, or the last ordinary token if neither is present. With zero-based response indices, the tail set is
\begin{equation}
 \mathcal E=\{e-\lceil 0.1e\rceil,\ldots,e-1\},\qquad
 t_\star=\min\{t:j_t\in\mathcal E,\ v_t\notin\{\mask,\mathrm{EOS},\mathrm{EOT}\}\}.
\end{equation}
Here $(j_t,v_t)$ is the position and token committed at update $t$. This tail follows the realized response; the allocated-window boundary is at $L$. They lie close together in boundary-filling low-budget traces. Replay constructs $\mathbf{x}_\star$ using only commitments with $t<t_\star$, leaving the event position masked. The original tail set remains fixed in subsequent probes and interventions.

\begin{algorithm}[H]
\caption{Same-state visible-window expansion}
\label{alg:panic-window-expansion}
\small
\begin{algorithmic}[1]
\Require Model $f_\theta$, prompt $\mathbf{q}$, low-budget trace $\tau$, lengths $L_\ell<L_h$
\Ensure Attention and prediction measurements before the next commitment
\State Identify tail $E$ and first ordinary tail event $(t_\star,j_\star,v_\star)$ in $\tau$
\State $\mathbf{x}_\star\gets[\mathbf{q};\mask^{L_\ell}]$
\For{each commitment $(t,j_t,v_t)$ in $\tau$ with $t<t_\star$}
    \State Set response position $j_t$ in $\mathbf{x}_\star$ to $v_t$
\EndFor
\State $\mathbf{x}^+\gets[\mathbf{x}_\star;\mask^{L_h-L_\ell}]$
\State $\mathcal D_\ell\gets f_\theta(\mathbf{x}_\star)$ \Comment{Original window}
\State $\mathcal D_+\gets f_\theta(\mathbf{x}^+)$ \Comment{Visible extension}
\State $\mathcal D_0\gets f_\theta(\mathbf{x}^+;\text{appended keys hidden})$ \Comment{Padding control}
\State Exclude appended positions from selection in $\mathcal D_0$
\State Fix the front candidate using $\mathcal D_\ell$
\For{$\mathcal D\in\{\mathcal D_\ell,\mathcal D_+,\mathcal D_0\}$}
    \State Record attention at query $j_\star$ and probabilities of the fixed targets
    \State Record commitment scores and the next eligible position
    \State Compare scores restricted to the original masked positions
\EndFor
\State \Return Recorded measurements
\end{algorithmic}
\end{algorithm}

Appended visible masks participate in bidirectional attention and are eligible for the diagnostic next-position selection. The additional shared-position comparison restricts this competition to the original window. These are single-forward probes; native high-budget trajectories are decoded separately from initialization.

\paragraph{Attention after commitment.}
The replay retains fixed front queries selected from unfilled positions and fixed tail/prefix key sets at Before, After, and offsets $+2,+4,+8,+16,+32$. Each value is a paired change from Before, averaged over heads and queries within a case and then over valid cases. Coverage is 145/155 cases. Immediately after commitment, tail attention increases by 1.908 percentage points on average, with increases in 140/145 cases; prefix attention changes by $-0.413$ points. At $+32$, the respective changes are $+4.194$ and $-1.875$ points. Later offsets describe the evolving trajectory, including additional natural commitments; the immediate Before/After pair measures the local change around the first tail commitment.

\paragraph{Interventions on commitment order.}
\label{app:panic-order-construction}
Low Budget retains the archived low-budget schedule. Tail Delay prevents selection of the original response-tail positions for the first 10\% of the remaining low-budget updates. Matched-order Control changes decisions at the same intervention offsets by excluding its own current highest-scoring position, without a tail-specific rule. Both branches start from $\mathbf{x}_\star$ and retain $L=T=512$, one commitment per update, and the full decoding schedule. Correct outcomes number 67 and 47, respectively. The paired difference is 12.903 percentage points, with a 95\% interval of [7.742,18.710] from 2,000 bootstrap resamples of questions. This tests recovery through commitment order at fixed budget.

\paragraph{Matched intervention schedule.}
The construction in Algorithm~\ref{alg:panic-order-interventions} sets $R=T-t_\star$ and $D=\max(1,\lceil 0.1R\rceil)$. Let $\operatorname{Top}(\mathbf{x},\mathcal A)$ return the highest-confidence currently masked position in eligible set $\mathcal A$, and let $\widehat v_j(\mathbf{x})$ be its predicted token. The same predictor and confidence rule are used in both branches. Excluding a position changes eligibility for that update; it leaves all committed tokens intact.

\begin{algorithm}[H]
\caption{Tail Delay and Matched-order Control}
\label{alg:panic-order-interventions}
\small
\begin{algorithmic}[1]
\Require Pre-event state $\mathbf{x}_\star$, original tail $\mathcal E$, remaining updates $R$, delay $D$
\Ensure Tail-delay continuation $\mathbf{x}^{\mathrm{delay}}$ and matched-order continuation $\mathbf{x}^{\mathrm{control}}$
\State $\mathbf{x}\gets\mathbf{x}_\star$; $\mathcal K\gets\varnothing$
\For{$k=0,\ldots,R-1$} \Comment{Tail Delay}
    \State $\mathcal M\gets\{j:x_j=\mask\}$; $i\gets\operatorname{Top}(\mathbf{x},\mathcal M)$; $j\gets i$
    \If{$k<D$ \textbf{and} $\mathcal M\setminus\mathcal E\ne\varnothing$}
        \State $j\gets\operatorname{Top}(\mathbf{x},\mathcal M\setminus\mathcal E)$
    \EndIf
    \If{$j\ne i$}
        \State $\mathcal K\gets\mathcal K\cup\{k\}$ \Comment{Record changed decisions}
    \EndIf
    \State $x_j\gets\widehat v_j(\mathbf{x})$
\EndFor
\State $\mathbf{x}^{\mathrm{delay}}\gets\mathbf{x}$; $\mathbf{x}\gets\mathbf{x}_\star$
\For{$k=0,\ldots,R-1$} \Comment{Matched-order Control}
    \State $\mathcal M\gets\{j:x_j=\mask\}$; $i\gets\operatorname{Top}(\mathbf{x},\mathcal M)$; $j\gets i$
    \If{$k\in\mathcal K$ \textbf{and} $\mathcal M\setminus\{i\}\ne\varnothing$}
        \State $j\gets\operatorname{Top}(\mathbf{x},\mathcal M\setminus\{i\})$
    \EndIf
    \State $x_j\gets\widehat v_j(\mathbf{x})$
\EndFor
\State $\mathbf{x}^{\mathrm{control}}\gets\mathbf{x}$
\State \Return $\mathbf{x}^{\mathrm{delay}},\mathbf{x}^{\mathrm{control}}$
\end{algorithmic}
\end{algorithm}

The control's excluded position is recomputed from its own state and can itself lie in the tail. Thus, the comparison is between a tail-defined eligibility constraint and a matched schedule of current-winner exclusions. When an exclusion would leave no candidate, the decoder uses its default winner and records the fallback. The reference decoder predicts tokens by argmax, suppresses EOS/EOT in the confidence calculation, masks already committed positions, and then chooses one eligible position. These order interventions operate within the original window throughout.

\subsection{Complementary checks on visible space and tail reading}
\label{app:panic-pilot}
\paragraph{Visible masks versus invisible padding.}
This check uses 20 low-budget failure/high-budget short-success pairs. At the state immediately before the first ordinary tail commitment, we retain the same prompt, committed contents, event-tail query, and token targets across the original window, visible-mask expansion, and invisible padding. Visible expansion reduces mean tail-target probability from 46.30\% to 0.25\%, while invisible padding gives 46.24\%. The fixed front target changes from 47.65\% to 52.12\% with visible expansion and to 47.79\% with invisible padding. The next selected position moves away from the original tail in 19/20 cases, and the confidence margin reverses even when competition is restricted to original positions. These comparisons isolate the effect of visible generation space from added tensor length and competition from new positions. Algorithm~\ref{alg:panic-window-expansion} specifies the construction.

\paragraph{Temporary restriction of tail reading.}
A complementary intervention blocks front queries from attending to committed ordinary tail keys in every layer for an initial interval. It repairs one case and breaks one among the 20 pairs, leaving aggregate accuracy unchanged. This intervention restricts access to already committed content; the order interventions in Figure~\ref{fig:panic-decoding-intervention} instead postpone its commitment while retaining normal bidirectional attention.

\subsection{Irreversible commitment and reachable completions}
\label{app:panic-measurement}
For a partial response, let
\begin{equation}
 \mathcal F_t(L)=\{\mathbf{y}\in\mathcal V^L:y_j=y_{t,j}
     \text{ for every }j\in C_t\}.
 \qquad \mathcal F_{t+1}(L)\subseteq\mathcal F_t(L).
 \label{eq:compatible-completions}
\end{equation}
If $\mathcal G(\mathbf{q},L)$ denotes correct completions under a specified evaluator and extraction rule, then $\mathcal F_{t+1}(L)\cap\mathcal G(\mathbf{q},L)\subseteq\mathcal F_t(L)\cap\mathcal G(\mathbf{q},L)$. A wrong fixed fragment can remove compatible correct continuations; its content also changes subsequent predictions and position scores. Since both correct and incorrect commitments shrink the set, the relevant question is which successful continuations remain reachable. The order interventions test whether a different sequence of commitments can recover successful paths at the same budget. Budget-Induced Premature Commitment denotes premature boundary-near commitment while the intervening reasoning remains unresolved despite sufficient response space.

\subsection{Paired trace examples and visualization protocol}
\label{app:panic}
\label{app:panic-additional}
The paired traces in Figures~\ref{fig:panic-trace-cases-1} and~\ref{fig:panic-trace-cases-2} show 16 GSM8K responses at $L=T=512$ and $L=T=1024$. The supplied semantic-answer audit verifies an incorrect low-budget answer and a correct high-budget answer in each pair. Selection additionally requires the low-budget answer to be fully committed within the first 35\% of response decoding, the high-budget answer in the final 10\%, and a tail-before-prefix inversion score of at least 0.85 and at most 0.08, respectively. These examples illustrate an outcome and ordering contrast, not the prevalence of Budget-Induced Premature Commitment.

\paragraph{Reading the figures.}
Tokens follow final response order. Color encodes $(s+1)/R_{\mathrm{resp}}$, where $s$ is the stored zero-based commitment update and $R_{\mathrm{resp}}$ is the exported response-completion step. Purple is early, yellow is late, and gray marks special tokens. Normalization is separate for each response: equal colors do not imply equal update counts or equal fractions of the configured budget. The ribbon covers the realized response; text excerpts show its beginning and end. Headers report readable token length (RTL), while panel widths are rescaled for layout. RTL is 510 in every selected low-budget case and ranges from 162 to 355 at the high budget; neither RTL nor ribbon width is a count of model calls.

The low-budget ribbons expose early answer commitments followed by middle infilling. The paired high-budget responses commit their answers near completion and are shorter. For example, \texttt{gsm8k\_test\_0314} fixes the wrong answer 20 early at the low budget but reaches 30 near high-budget completion. The audited answer, rather than an automatically extracted number, is displayed. Extraction false negatives such as \texttt{gsm8k\_test\_0527} are excluded. In \texttt{gsm8k\_test\_1248}, the reviewed low-budget answer is 251 whereas the archived grader extracts 15; both are incorrect relative to 750. This audit prevents grading artifacts from being presented as reasoning failures without retroactively changing benchmark scores.

\begin{figure}[p]
\centering
\includegraphics[width=\linewidth,height=0.86\textheight,keepaspectratio,trim=3mm 10mm 0mm 10mm,clip]{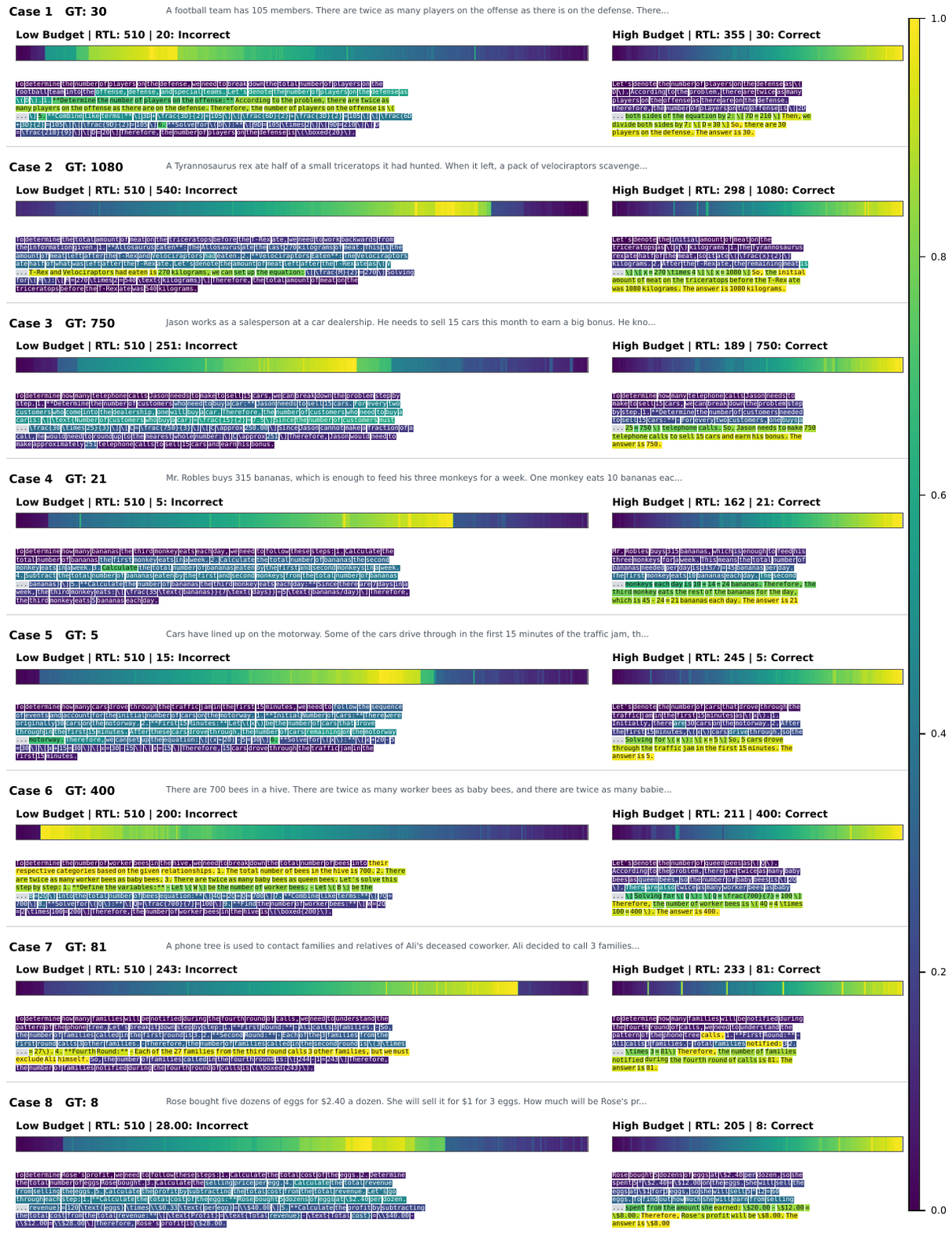}
\caption{\textbf{Paired premature-commitment traces, Cases 1--8.} Left: low budget; right: high budget. Headers give RTL. Ribbons and excerpts encode commitment timing in final response order: purple is early, yellow late, and gray denotes special tokens. Timing is normalized per response, and panel widths are rescaled. Cases are selected for contrasting outcomes and commitment order.}
\label{fig:panic-trace-cases-1}
\end{figure}

\begin{figure}[p]
\centering
\includegraphics[width=\linewidth,height=0.86\textheight,keepaspectratio,trim=3mm 10mm 0mm 10mm,clip]{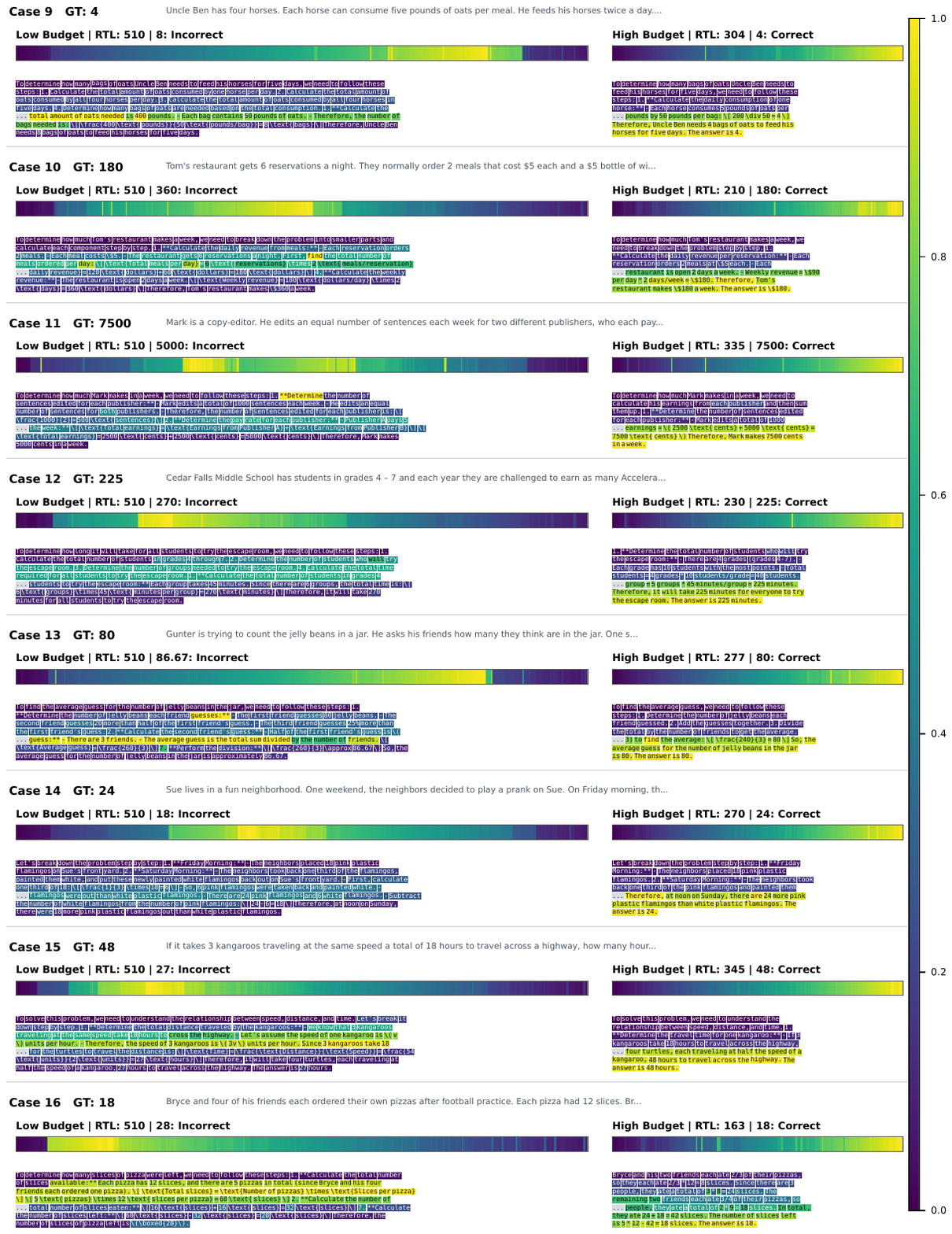}
\caption{\textbf{Paired premature-commitment traces, Cases 9--16.} The encoding follows Figure~\ref{fig:panic-trace-cases-1}. Low-budget incorrect answers are fixed before much of the intervening reasoning; the paired high-budget answers are correct and committed near completion. RTL provides the absolute length comparison.}
\label{fig:panic-trace-cases-2}
\end{figure}

\clearpage
\section{HiLo-RD: Objective Details and Local Guarantees}
\label{app:method-details}

\subsection{Trajectory sampling and objective normalization}
\label{app:objective-details}
Within a training phase, the frozen teacher generates
\begin{equation}
 \xi=(\mathbf{y}_0,\ldots,\mathbf{y}_R)
 \sim\mathsf P_{\bar\theta}^{B_{\mathrm T}}(\cdot\mid\mathbf{q}),
 \qquad R\leq T_{\mathrm T}.
 \label{eq:teacher-trajectory}
\end{equation}
The reference fast-acquisition implementation retains at most $K=8$ states, including the initial all-MASK state. Each paired forward uses the pre-update state $\mathbf{y}_t$: the teacher processes the full window and the student processes its crop. Supervision eligibility is determined after the corresponding update:
\begin{equation}
 m_{t,j}=\mathbf{1}\{j\in C_{t+1}\},\qquad
 n_t=\sum_{j=1}^{L_{\mathrm S}}m_{t,j}.
 \label{eq:supervision-mask}
\end{equation}
This cumulative mask includes previously committed positions and positions newly committed in the update. A newly selected position can therefore be supervised while its input is still masked. Earlier commitments receive repeated supervision as surrounding content develops. The loss in Eq.~(\ref{eq:state-loss}) normalizes by $d n_t$, so a later state does not receive more weight solely because more positions have been filled.

Let $\mathcal I_+(\xi)$ contain the retained states with $n_t>0$. We average their losses:
\begin{equation}
 \ell_\xi(\theta)=\frac{1}{|\mathcal I_+(\xi)|}
 \sum_{t\in\mathcal I_+(\xi)}\ell_t(\theta;\xi),
 \label{eq:trajectory-loss}
\end{equation}
with $\ell_\xi=0$ if the set is empty. For a batch $\mathcal B$ and a trajectory gate $g(\xi,\mathbf{q},a)\in\{0,1\}$,
\begin{equation}
 \mathcal L_{\mathrm{HiLo\text{-}RD}}(\theta;\mathcal B)=
 \frac{\sum_{(\mathbf{q},a,\xi)\in\mathcal B}
       g(\xi,\mathbf{q},a)\ell_\xi(\theta)}
 {\max\{1,\sum_{(\mathbf{q},a,\xi)\in\mathcal B}
       g(\xi,\mathbf{q},a)\}}.
 \label{eq:batch-loss}
\end{equation}
The reported positive training recipe accepts teacher responses whose extracted answers pass verification. An empty accepted batch contributes zero signal. The answer label selects rollouts; it is not supplied as teacher input or used as the representation target. This filter certifies the extracted answer, not every intermediate reasoning step. Setting $g\equiv1$ yields an unfiltered variant without correctness labels. No gradient passes through the sampled tokens, commitment decisions, or frozen teacher.

\subsection{Completed-prefix stopping}
\label{app:cost-stopping}
The clean-EOT rule and the prefix-cap rule in Section~\ref{sec:fast-rollout} share a preservation property under irreversible decoding.

\begin{proposition}[Completed-prefix preservation]
\label{prop:stop-prefix}
Consider full and fast decoders with the same initialization, update schedule, tie-breaking, and coupled randomness. Assume that committed tokens are never revised. If fast decoding stops after $R$ updates, its states and actions through $R$ equal the corresponding prefix of full decoding. Every fully committed response prefix is unchanged by continuing the full decoder. If the leftmost committed EOT is at position $e$ and all preceding positions are committed, extraction strictly before the first EOT returns the same token sequence after either run.
\end{proposition}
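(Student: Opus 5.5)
The plan is to model both decoders as one deterministic transition map and argue by induction on the update index. Fix the prompt $\mathbf{q}$, the budget $B=(L,T)$, the update schedule, the tie-breaking rule and a realization of the coupled randomness. Each update $t$ is then a function $\Phi_t$ that sends $\mathbf{y}_t$ to an action $a_t$ (which positions are selected and which tokens they receive) and to the next state $\mathbf{y}_{t+1}$. Because the randomness is coupled, $\Phi_t$ is the same function in the full and the fast decoder. The fast decoder differs only in that it applies a stopping predicate after some update and halts. Since $\mathbf{y}_0=\mask^L$ in both runs, induction on $t\le R$ gives $\mathbf{y}^{\mathrm{fast}}_t=\mathbf{y}^{\mathrm{full}}_t$ and $a^{\mathrm{fast}}_t=a^{\mathrm{full}}_t$. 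This proves the first claim.

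The one point needing care is that the stopping check must not influence what happens before stopping. The clean-EOT rule and the $L_{\mathrm S}$-prefix cap only read the current state. They do not change the predictor, the confidence ranking or the eligible set, since all positions stay eligible until termination. So up to update $R$ the fast run executes exactly the same $\Phi_t$ as the full run. I expect this to be the main subtlety, because it is where the hypothesis of an unchanged schedule is used: the fast run must not, for example, rescale how many positions it commits per update.

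For the second claim, let $C_t$ be as in Eq.~(\ref{eq:decoding-positions}). Irreversibility means $C_t\subseteq C_{t+1}$ and $y_{t+1,j}=y_{t,j}$ for every $j\in C_t$, equivalently $\mathcal F_{t+1}(L)\subseteq\mathcal F_t(L)$ in Eq.~(\ref{eq:compatible-completions}). By induction on $s\ge R$, the full run keeps every position of $C_R$ with its token at $\mathbf{y}_R$ through $\mathbf{y}_T$. Hence if $\{1,\dots,k\}\subseteq C_R$, the prefix $y_{1:k}$ of the final full state equals that of the fast state.

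For the third claim, take $k=e$. Positions $1,\dots,e$ are committed at step $R$, $y_{R,e}$ is EOT, and no $j<e$ carries an EOT, since $e$ is the leftmost committed EOT and every position before it is committed. By the previous step these $e$ tokens are frozen in the full run. The only thing left to check is that no EOT can appear to the left of $e$ later, and this holds because all positions $j<e$ are already committed and cannot be revised. So the first EOT in the final full response is still at $e$. Extraction strictly before the first EOT therefore returns $y_{R,1:e-1}$ from both runs.
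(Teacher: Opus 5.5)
Your proposal is correct and follows the paper's proof. Both argue by induction on updates to get trajectory-prefix equality, use irreversibility to freeze the completed prefix, and use the clean-EOT condition to rule out a later EOT appearing to the left of $e$; your remark that the stopping predicates only read the state and leave the predictor, ranking and eligible set unchanged spells out the paper's phrase ``both decoders apply identical updates to identical states.''
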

\noindent\emph{Proof.}
Before termination, both decoders apply identical updates to identical states. Induction gives trajectory-prefix equality. Irreversibility then preserves every token in the completed prefix. Under clean EOT, all positions through $e$ are fixed, so no earlier EOT can subsequently appear and extraction before $e$ is unchanged.\hfill$\square$

Stopping at a completed cap $c<L$ preserves only the first $c$ positions and may truncate a solution that continues beyond them. The teacher can use $c=L_{\mathrm S}$ while retaining full-window attention and allowing commitments outside that prefix before termination. The evaluation driver's default cap instead equals its allocated response length. Both rules must execute inside the decoding loop; discarding later states after a complete rollout does not save forward passes.

The output guarantee requires the stated extraction rule. The archived evaluation driver decodes the returned response array with \texttt{skip\_special\_tokens=True}, without explicitly cutting at the first EOT. Already committed post-EOT text can then enter grading. Prefix preservation alone does not establish identical scores for that extractor; raw outputs, extracted answers, and stop reasons must be compared under a common protocol.

\subsection{Representation matching and token distributions}
\label{app:proofs}
The vocabulary head translates a representation difference into a logit difference. The following bound formalizes this local relation without assuming identical autonomous decoding paths.

\begin{proposition}[Local token-distribution control]
\label{prop:token-control}
Let $V=|\mathcal V|$ and let $z^{\mathrm T}=Wh^{\mathrm T}+b$ and $z^{\mathrm S}=Wh^{\mathrm S}+b$ be finite logits under the same fixed head $W\in\mathbb R^{V\times d}$. For $\tau>0$, define $p^{\mathrm T}_\tau=\softmax(z^{\mathrm T}/\tau)$, $p^{\mathrm S}_\tau=\softmax(z^{\mathrm S}/\tau)$, $\mathsf C=I_V-\mathbf{1}\mathbf{1}^\top/V$, and $e=\|h^{\mathrm S}-h^{\mathrm T}\|_2^2/d$. Then
\begin{equation}
 D_{\mathrm{KL}}(p^{\mathrm T}_\tau\|p^{\mathrm S}_\tau)
 \leq\frac{d\|\mathsf CW\|_{\mathrm{op}}^2}{4\tau^2}\,e.
 \label{eq:kl-bound}
\end{equation}
The same bound holds with the KL arguments exchanged.
\end{proposition}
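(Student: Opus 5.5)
The plan is to regard KL as a function of the logits and bound it by a quadratic in the logit difference, using the curvature of log-sum-exp. Write $u=z^{\mathrm T}/\tau$ and $w=z^{\mathrm S}/\tau$, and set $A(x)=\log\sum_k e^{x_k}$. Then
\[
D_{\mathrm{KL}}(\softmax(u)\|\softmax(w))=A(w)-A(u)-\langle\nabla A(u),w-u\rangle .
\]
This is the Bregman divergence of $A$ between $w$ and $u$, because $\nabla A(u)=\softmax(u)$. By Taylor's theorem with integral remainder, it equals
\[
\int_0^1(1-s)\,(w-u)^\top\nabla^2A(u+s(w-u))\,(w-u)\,ds .
\]

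The next step bounds the Hessian. We have $\nabla^2A(x)=\mathrm{diag}(p)-pp^\top$ with $p=\softmax(x)$, which is the covariance matrix of the one-hot vector $e_K$ for $K\sim p$. For any $v$, the quadratic form $v^\top\nabla^2A\,v$ equals $\mathrm{Var}_p(v_K)$. Two facts about this variance do the work.
\begin{itemize}
\item It is invariant to adding a constant to $v$, so $v$ may be replaced by $\mathsf Cv$.
\item Popoviciu's inequality gives $\mathrm{Var}\le(\max v-\min v)^2/4$.
\end{itemize}
To reach a Euclidean bound, use $(\max v-\min v)^2\le 2\|v-c\mathbf 1\|_2^2$ for a suitable $c$. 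The cleaner route is the crude bound $\mathrm{Var}_p(v_K)\le \max_k (v_k-\bar v)^2\le\|\mathsf Cv\|_2^2$, which gives Hessian $\preceq$ identity on the centered subspace.

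Combining, $\int_0^1(1-s)\,ds=1/2$, so
\[
D_{\mathrm{KL}}\le \tfrac{1}{2}\,\|\mathsf C(w-u)\|_2^2 .
\]
The stated constant is $1/4$, and obtaining it is the main obstacle. I would aim to use Popoviciu, $\mathrm{Var}\le(\max-\min)^2/4$, together with $(\max-\min)^2\le 2\|\mathsf Cv\|_2^2$. This yields $\mathrm{Var}\le\|\mathsf Cv\|^2/2$, and with the factor $1/2$ from the remainder integral the bound becomes $\|\mathsf C(w-u)\|^2/4$.

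It then remains to substitute $w-u=W(h^{\mathrm S}-h^{\mathrm T})/\tau$; the bias $b$ cancels. This gives
\[
\|\mathsf C(w-u)\|^2\le\|\mathsf CW\|_{\mathrm{op}}^2\,\|h^{\mathrm S}-h^{\mathrm T}\|^2/\tau^2=\|\mathsf CW\|_{\mathrm{op}}^2\,d\,e/\tau^2 ,
\]
which is exactly Eq.~(\ref{eq:kl-bound}).

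The exchanged-argument case follows by the same argument with the roles of $u$ and $w$ swapped. The bound depends only on $\|\mathsf C(w-u)\|$, which is symmetric in $u$ and $w$. Finiteness of the logits ensures $A$ is smooth along the segment, so Taylor's theorem applies.
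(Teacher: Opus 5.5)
Your proof is correct and takes essentially the same route as the paper. You write the KL as the Bregman divergence of log-sum-exp, bound the Hessian quadratic form (a variance under $p$) via Popoviciu together with $(\max_i x_i-\min_i x_i)^2\le 2\|x\|_2^2$ to get curvature at most $\tfrac12$, gain another factor $\tfrac12$ from the second-order remainder, and finish with $\|\mathsf C W(h^{\mathrm S}-h^{\mathrm T})\|_2\le\|\mathsf CW\|_{\mathrm{op}}\|h^{\mathrm S}-h^{\mathrm T}\|_2$. The paper instead centers the logits first and applies a global $\tfrac12$-smoothness bound, while you insert $\mathsf C$ through shift-invariance of the variance inside an integral Taylor remainder; this difference is cosmetic, and your detour through the constant $\tfrac12$ can be dropped, since the refined step follows directly from $(x_i-x_j)^2\le 2(x_i^2+x_j^2)$ with $x=\mathsf Cv$.
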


\noindent\emph{Proof.}
Let $A(u)=\log\sum_{v=1}^{V}e^{u_v}$. Its gradient is softmax, and its Hessian is $\operatorname{diag}(p)-pp^\top$, where $p=\softmax(u)$. For any unit vector $w$, the Hessian quadratic form is the variance of $w_i$ under $i\sim p$, so
\begin{equation}
 w^\top\nabla^2A(u)w
 \leq\frac{(\max_i w_i-\min_i w_i)^2}{4}
 \leq\frac12.
\end{equation}
Thus, $A$ has a $1/2$-Lipschitz gradient and
\begin{equation}
 A(v)-A(u)-\nabla A(u)^\top(v-u)\leq\tfrac14\|v-u\|_2^2.
 \label{eq:bregman-bound}
\end{equation}
Set $u=\mathsf Cz^{\mathrm T}/\tau$ and $v=\mathsf Cz^{\mathrm S}/\tau$. Centering leaves softmax unchanged. Substitution gives
\begin{align}
 D_{\mathrm{KL}}(p^{\mathrm T}_\tau\|p^{\mathrm S}_\tau)
 &=A(v)-A(u)-\nabla A(u)^\top(v-u)\\
 &\leq\frac{\|\mathsf C(z^{\mathrm S}-z^{\mathrm T})\|_2^2}{4\tau^2}\\
 &=\frac{\|\mathsf CW(h^{\mathrm S}-h^{\mathrm T})\|_2^2}{4\tau^2}
 \leq\frac{d\|\mathsf CW\|_{\mathrm{op}}^2}{4\tau^2}e.
\end{align}
Exchanging $u$ and $v$ proves the reverse-direction bound; the common bias cancels.\hfill$\square$

Applying the objective's nonnegative averaging weights gives the corresponding average bound on supervised pairs. The result concerns finite raw logits and does not cover hard logit suppression or discrete selection directly. If the teacher's unique top logit has margin $\gamma>0$, the sufficient condition $\|\mathsf CW\|_{\mathrm{op}}\|h^{\mathrm S}-h^{\mathrm T}\|_2<\gamma/2$ also preserves its token argmax: every centered logit changes by less than half the margin. Neither statement guarantees position-selection agreement, because other competing positions may not be supervised and autonomous student states may differ.

\subsection{What the smaller-window representation can retain}
\label{sec:interpretation}
Fix a teacher phase and let $\mu$ be the distribution over supervised position pairs induced by the rollout, filter, state sampling, and normalization in Eq.~(\ref{eq:batch-loss}). Write $U=(\mathbf{q},\mathbf{y}_{t,1:L_{\mathrm S}},j)$ for the student-visible state and queried position, and $Z=h^{\mathrm T}_{t,j}$ for the target; the student does not receive $t$ as a separate input. For square-integrable $Z$ and any square-integrable predictor $f(U)$,
\begin{align}
 \E_\mu\|f(U)-Z\|_2^2
 &=\E_\mu\|f(U)-\E_\mu[Z\mid U]\|_2^2\nonumber\\
 &\quad+\E_\mu\|Z-\E_\mu[Z\mid U]\|_2^2.
 \label{eq:regression-decomposition}
\end{align}
To see this, expand around $m(U)=\E_\mu[Z\mid U]$; the cross term has zero conditional expectation given $U$. Hence the unrestricted optimum is $f^*(U)=\E_\mu[Z\mid U]$. The objective transfers the component of higher-budget representations predictable from the shared prompt and cropped state. It need not reconstruct every detail that depends on omitted response positions. A parameterized student may incur approximation and optimization error, and this regression identity does not guarantee reasoning quality on its autonomous rollouts.

\subsection{Optional outcome-conditioned repulsion}
\label{app:outcome-variants}
These variants supplement the positive objective and are considered separately, with the other repulsion branch disabled. Repulsion is disabled in the reference configuration. Let $D_{\mathrm H}$ be the higher-budget trajectory distance in Eq.~(\ref{eq:trajectory-loss}), and let $g_{\mathrm H}$ indicate final-answer correctness. A failed-high-budget variant uses
\begin{equation}
 \mathcal L_{\mathrm{failed\text{-}H}}
 =\operatorname{Avg}_{g_{\mathrm H}=1}D_{\mathrm H}
 -\lambda_{\mathrm H}\operatorname{Avg}_{g_{\mathrm H}=0}\min(D_{\mathrm H},m_{\mathrm H}).
\end{equation}
The two groups are normalized separately, and empty groups contribute zero. Because $-\min(D,m)=[m-D]_+-m$, the negative term is gradient-equivalent to margin repulsion, which stops increasing separation beyond $m_{\mathrm H}$.

A separate unfiltered-positive variant draws an additional failed rollout with fewer updates:
\begin{equation}
 \mathcal L_{\mathrm{failed\text{-}L}}
 =\operatorname{Avg}_{\mathcal B}
 [D_{\mathrm H}+\lambda_{\mathrm L}(1-g_{\mathrm L})[m_{\mathrm L}-D_{\mathrm L}]_+].
\end{equation}
Here $D_{\mathrm L}$ uses that rollout's own paired states; its penalty is averaged over the whole batch rather than only the failures. In the supplied implementation, this negative teacher retains the long window and reduces the step count, e.g., $(L_{\mathrm{neg}},T_{\mathrm{neg}})=(1024,128)$, not $(128,128)$. Removing $1-g_{\mathrm L}$ disables failure-only gating. Applying the high-budget correctness gate instead gates both terms on the rank, producing a distinct recipe; the failed-high-budget branch and that gate use mutually exclusive code paths. Representation repulsion does not guarantee a lower probability for a specified erroneous trajectory.

\section{Implementation and Evaluation Details}
\label{app:implementation}

\subsection{Training configuration and deployment}
The supplied GSM8K reference configuration is recorded in Table~\ref{tab:training-config}. The main experiments train separate checkpoints on the full GSM8K and original MATH training sets and evaluate them on GSM8K and MATH500, respectively. Code evaluation uses mathematics-trained checkpoints without further code-domain adaptation. The configuration below describes the documented \method\ run; it is not a complete configuration record for every dataset and baseline.

\begin{table}[!ht]
\centering
\caption{\textbf{GSM8K reference training configuration.} \tablekey{tableBestPale}{Teacher} and \tablekey{tableLowPale}{Student} distinguish the two budget views. The remaining settings document trajectory selection and adapter training for the full-training-set run. The 1,000-example successive-distillation study uses a separate training subset.}
\label{tab:training-config}
\small
\setlength{\tabcolsep}{6pt}
\renewcommand{\arraystretch}{1.15}
\begin{tabular}{@{}ll@{}}
\toprule
Component & Setting\\
\midrule
Backbone & LLaDA-8B-Instruct\\
Teacher budget & \cellcolor{tableBestPale}$L_{\mathrm T}=T_{\mathrm T}=1024$\\
Student training window & \cellcolor{tableLowPale}$L_{\mathrm S}=512$\\
Teacher response-prefix cap & $c=L_{\mathrm S}$\\
Retained states & Up to $K=8$, including the initial state\\
Representation & Final RMSNorm, before vocabulary projection\\
Supervision positions & Post-update cumulative committed positions\\
Trajectory filter & Teacher final-answer correctness\\
Adapter & LoRA rank 16, scaling 32\\
Learning rate / warmup & $10^{-5}$ / 50 optimizer steps\\
Evaluated checkpoint & Optimizer step 900\\
Training pool & GSM8K train, 7,473 examples\\
Checkpoint interval & 100 optimizer steps\\
Training hardware & Eight NVIDIA A100 80GB GPUs\\
Auxiliary repulsion & Disabled\\
\bottomrule
\end{tabular}
\end{table}

Only the student's LoRA parameters are optimized; the teacher, backbone, and vocabulary head remain fixed. At deployment, the adapter can be merged into the backbone and the student decodes autonomously. The number of sampled training states does not determine its deployment step budget $T_{\mathrm S}$.

\subsection{Computation and latency}
Equation~(\ref{eq:forward-cost}) describes dense recomputation without classifier-free guidance. It does not assume that quadratic attention dominates at the studied lengths or exclude implementations with partial reuse. For variable-window decoding, the corresponding cost is $\sum_{t=0}^{R-1}F(P+L_t)$.

Measured latency also depends on hardware, kernels, batching, and prompt length. The reported Full/Fast timings include results from different servers and remain subject to normalization; their ratios are not controlled speedup estimates. Mean per-request latency is distinct from dataset throughput and total multi-GPU wall time. Teacher-acquisition savings require stopping the decoding loop, rather than retaining fewer states after a full rollout.

\subsection{Readable length, extraction, and pass@$k$}
\label{app:metrics}
The archived evaluation driver measures RTL by retokenizing the decoded response after removing special tokens. This can differ from the number of occupied response positions or the length extracted before the first EOT. RTL is neither the configured window size nor the realized number of forward passes: out-of-prefix commitments also consume updates. The prefix-preservation guarantee in Appendix~\ref{app:cost-stopping} therefore requires an explicit extraction convention when comparing Full and Fast outputs.

For a problem with $n$ independently sampled completions and $c$ passing all code tests, the standard estimator is
\begin{equation}
 \widehat{\mathrm{pass@}k}
 =1-\frac{\binom{n-c}{k}}{\binom nk},\qquad n\geq k.
\end{equation}
The dataset score averages this quantity over problems. Its interpretation depends on the sample count, decoding distribution, and code-test protocol; a per-completion latency is not the total cost of obtaining a multi-sample pass@$k$ score.

\section{Successive Distillation: Procedure and Measurements}
\label{app:iterative-refinement}
\label{app:iteration}
Each cycle merges the preceding adapter into the backbone before initializing the next frozen teacher and trainable student. If $\theta^{(r-1)}$ is the merged model, cycle $r$ uses $\bar\theta^{(r)}=\theta^{(r-1)}$ and a student initialized from the same weights with a new adapter $\Delta^{(r)}$. The resulting merged model supplies the next cycle's supervision. This extension changes the teacher between cycles; the teacher remains fixed within each cycle.

The measurements underlying Figure~\ref{fig:iterative-c0-c3} are provided in Table~\ref{tab:iterative-c0-c3}. Every cycle reuses the same 1,000 GSM8K training examples selected with a fixed seed. C0 is the unadapted model, and C1--C3 denote successive cycles; this sequence is separate from the full-training-set main comparison.

\begin{table}[!ht]
\fontencoding{OT1}\selectfont
\centering
\caption{\textbf{Successive distillation improves the low-budget quality--length trade-off.} C0--C3 reuse the same 1,000 GSM8K training examples. Accuracy is in percent, RTL is mean readable tokens, and Time is seconds per request. \tablekey{tableFocus}{C3} identifies the final cycle; \tablekey{tableBestPale}{Best} marks the highest accuracy or shortest response within each budget. Bold indicates the best value in each column, including ties.}
\label{tab:iterative-c0-c3}
\small
\setlength{\tabcolsep}{6pt}
\renewcommand{\arraystretch}{1.16}
\begin{tabularx}{\linewidth}{@{}l*{6}{>{\raggedleft\arraybackslash}X}@{}}
\toprule
& \multicolumn{3}{c}{\bfseries Low budget} & \multicolumn{3}{c}{\bfseries High budget}\\
\cmidrule(lr){2-4}\cmidrule(lr){5-7}
Cycle & Acc.$\uparrow$ & RTL$\downarrow$ & Time$\downarrow$ & Acc.$\uparrow$ & RTL$\downarrow$ & Time$\downarrow$\\
\midrule
C0 (vanilla) & 68.84 & 362.8 & 73.35 & 79.61 & 278.4 & 66.05\\
C1 & 74.37 & 309.3 & 60.16 & \tablebest{81.05} & 250.7 & 59.18\\
C2 & 76.19 & 275.5 & 53.90 & \tablebest{81.05} & 235.7 & 55.63\\
\rowcolor{tableFocus}
C3 & \tablebest{78.09} & \tablebest{254.2} & \textbf{49.71} & 80.74 & \tablebest{222.9} & \textbf{52.47}\\
\bottomrule
\end{tabularx}
\end{table}

Low-budget accuracy improves monotonically from C0 to C3, accompanied by consistent reductions in readable length and latency. High-budget decoding also becomes substantially more efficient while retaining an accuracy advantage over C0. The improvement at both budgets allows each adapted model to generate more effective supervision for the following cycle, supporting successive cross-budget distillation as a practical mechanism for self-evolving quality--budget refinement.

\end{document}